\newcommand\myeq{\mathrel{\stackrel{\makebox[0pt]{\mbox{\normalfont\tiny def}}}{=}}}
\title[Marginal Inference queries in HMMs under CFG constraints]{Marginal Inference queries in Hidden Markov Models under context-free grammar constraints }
  \author{\Name{Reda Marzouk} \\ \Email{mohamed-reda.marzouk@univ-nantes.fr}\\ 
   \Name{Colin de La Higuera} \\ \Email{cdlh@univ-nantes.fr}\\
   \addr Université de Nantes/Laboratoire des Sciences du Numérique de Nantes (LS2N, UMR CNRS 6004), Nantes, France}
\begin{document}

\maketitle
\begin{abstract}
    The primary use of any probabilistic model involving a set of random variables is to run inference and sampling queries on it. Inference queries in classical probabilistic models is concerned by the computation of marginal or conditional probabilities of events given as an input. When the probabilistic model is sequential, more sophisticated marginal inference queries involving complex grammars may be of interest in fields such as computational linguistics and NLP. In this work, we address the question of computing the likelihood of context-free grammars (CFGs) in Hidden Markov Models (HMMs). We provide a dynamic algorithm for the exact computation of the likelihood for the class of unambiguous context-free grammars. We show that the problem is NP-Hard, even with the promise that the input CFG has a degree of ambiguity less than or equal to 2. We then propose a fully polynomial randomized approximation scheme (FPRAS) algorithm to approximate the likelihood for the case of polynomially-bounded ambiguous CFGs.
\end{abstract}
\begin{keywords}
  Hidden Markov Models, Context-Free Grammars, Marginal Inference
\end{keywords}
Due to their simplicity and ease  of interpretation, Hidden Markov Models (HMMs) represent an important class of dynamic probabilistic models finding  its applications in many fields (bioinformatics, NLP, speech processing etc.). With regard to making inference in HMMs, two popular algorithms were proposed in the litterature: The forward algorithm and the Viterbi algorithm. The former computes the likelihood of generating a given sequence, while the latter is dedicated to the decoding problem. Algorithms for more elaborate marginal queries can be obtained by observing that HMMs restricted to a bounded support are nothing but tree-structured bayesian networks. Luckily, exact inference in this class of probabilistic models is known to be tractable (\cite{koller09}). With this correspondance, one can readily obtain procedures to compute infix (or, more generally island) probabilities in HMMs by straightforward reduction. In most applications where HMMs are employed, this suite of marginal inference queries proved to be sufficient for the task of interest. Yet, in fields such as computationnal linguistics, deeper structures of sequential data involving complex grammars are of interest. For instance, when HMMs are used in NLP tasks such as language modeling, we might be interested in extracting emergent grammars in the language model. Other concerns are related to the dicriminative bias in language models. Indeed, recent studies showed that trained language models are subject of capturing negative stereotypes present in the corpus used for training (\cite{sheng19}), a problem that raises a major ethical concern regarding the deployment of these models in real-case situations. The goal of this article is to explore the perspective of widening the scope of marginal inference queries we can make on HMMs by bringing to the picture languages generated by sophisticated grammars. Informally, let $\mathcal{R}$ be a class of machines/grammars inducing languages over the alphabet formed by the observation space of the HMM, the $\mathcal{R}$-likelihood problem can be stated as follows: Given an HMM $M$, a machine/grammar $R \in \mathcal{R}$ and an integer $L>0$: How to compute the likelihood of generating a sequence of length $L$ from $M$ accepted by $R$? \\
From the perspective of formal language theory, this problem can be seen as a generalization of the classical problem of counting the number of words accepted by a given machine/grammar. This counting problem is easily reducible to our problem, by choosing $M$ to be the trivial single-state HMM that assigns equal mass probability to all words at a given length. Therefore, by this straightforward reduction, one can show that the $\mathcal{R}$-likelihood problem is $\#$P-Hard when $\mathcal{R}$ is the class of nondeterministic finite automata (\cite{kannan95}). A fortiori, same negative results hold for general context-free grammars. \\
  Positive results of the $\mathcal{R}$-likelihood problem can be found in the litterature of computational linguistics interested in the closely related problem of computing the weight of different classes of languages in stochastic grammars such as probabilistic context-free grammars and probabilistic finite automata (PFAs). Indeed, HMMs bounded to a given length can be efficiently converted to a PFA. Therefore, an algorithm designed for PFAs to compute the weight of languages represented by a class of machines/grammars $\mathcal{R}$ can be adapted to our problem. Earlier works adressed the problem of computing the weight of prefixes, suffixes and infixes in PFAs (\cite{corazza81}, \cite{ana00}). Recently, \cite{cognetta18, cognetta19} proposed an incremental algorithm for computing the weight of regular languages in PFAs, where the regular languages are represented by \textit{unambiguous} regular expressions. \\
  In this article, we shall be interested in computing the likelihood of languages generated by different subclasses of  CFGs from HMMs (i.e. when $\mathcal{R}$ is a subclass of CFGs). The studied subsclasses will correspond to the hirerarchy of CFGs according to their ambiguity degree. The outline of the article is as follows: In Section 1, we present algebric descriptions of machines/grammars of interest in this article, that is CFGs and HMMs. In Section 2, we present a polynomial-time dynamic algorithm for solving the likelihood problem for the case of unambiguous context-free grammars (UCFG). In Section 3, we prove that the likelihood problem is NP-Hard even with the promise that the input CFG has an ambiguity degree less or equal to 2. We give afterwards a randomized procedure that provides FPRAS guarantees of the likelihood problem for the case of CFGs with polynomial ambiguity degree. 
\section{Algebric descriptions of CFGs/HMMs}
Let $\Sigma$ be a finite alphabet, and $\epsilon$ a special symbol representing the empty word. $\Sigma^{+}$ is the set of all strings formed by alphabet $\Sigma$, and $\Sigma^{*} = \Sigma^{+} \cup \{\epsilon\}$. For a given string $w \in \Sigma^{*}$, we denote by $|w|$ its length, by $w_{i:j}$ the substring of $w$ that spans from the $i$-th letter to the $j$-th letter in $w$, and $w_{i}$ to refer to its $i$-th letter.  For a given integer $n > 0$, we denote by $[n]$ the set of all integers from $1$ to $n$. A language $f$ is a mapping from $\Sigma^{*}$ to $\mathbb{R}$. When the image of the mapping is binary, then we call the language binary, in which case the language represents a subset of $\Sigma^{*}$. We extend the definition of languages to binary languages over $\Sigma^{*}$: For a subset $L \subseteq \Sigma^{*}$, $f(L) \myeq \sum\limits_{w \in L} f(w)$ which may be infinite. The indicator function of a set $X$ is denoted by $\delta_{X}$. When a language $f$ is described by a machine/grammar $M$, we'll use the notation $f_{M}$ to refer to the language induced by $M$. \\
$\bullet$ \textbf{Linear Algebra.} For a subset $I \subseteq [n]$, the vector of dimension $n$ in which all elements indexed by an element in $I$ are equal to 1 and 0 otherwise will be denoted $e_{I}^{(n)}$. When the dimension of the vector is clear from context, we omit the superscript. The transpose of a vector $v$ will be denoted by $v^{T}$. For two matrices $A, ~B$, the kronecker product will be denoted as $A \otimes B$ (\cite{zhang13}).
For a matrix $\mathbb{R}^{n \times m}$, $vec(A)$ denotes the vector in $\mathbb{R}^{n.m}$ that results from stacking columns of $A$ vertically. A generalization of vectors and matrices is tensors. A tensor is a multidimensional array $\mathbb{R}^{n_{1} \times..\times n_{d}}$, where $d$ is called the order of a tensor. The $(i_{1},..i_{d})$-th element of a tensor $\mathcal{T}$ will be denoted $\mathcal{T}[i_{1},..i_{d}]$. The mode-$k$ matriciziation of a tensor $\mathcal{T}$, denoted $\mathcal{T}_{k}$, is a reshape of a tensor into a matrix in $\mathbb{R}^{n_{k} \times n_{1}\dots n_{i-1}n_{i+1} \dots n_{d}}$. The columns $\mathcal{T}_{k}$ are the vectors 
$$\{\mathcal{T}[i_{1},i_{2} \dots i_{k-1},:,i_{k+1},i_{d}]: (i_{1},\dots i_{k-1},i_{k+1},\dots,n_{d}) \in \mathbb{R}^{n_{1}}\times..\mathbb{R}^{n_{k-1}}\times \mathbb{R}^{n_{k+1}}\times..\times \mathbb{R}^{n_{d}} \}$$
stacked in the inverse lexicographical order. The kronecker product to two tensors $\mathcal{T},~\mathcal{M}$ of the same order, denoted $\mathcal{M} \otimes \mathcal{T}$, is given as $(\mathcal{T} \otimes \mathcal{M})_{k} = \mathcal{T}_{k} \otimes \mathcal{M}_{k}$, for some $k \in [d]$. One can verify that the resulting tensor is independent of the chosen dimension. 

\subsection{Context-Free Grammars.} 
  A context-free grammar (CFG) $G$ in Chomsky form is represented by the tuple $G = <[n], \Sigma, R, S>$ where $[n]$ is the set of non-terminal symbols, $R$ is a set of production rules each having one of the following form: $i \rightarrow jk$ or $i \rightarrow \sigma$ where $i,j,k$ are non-terminals and $\sigma$ is a terminal symbol and $S \in [n]$ is the initial non-terminal symbol. The size of a CFG, denoted $|G|$, is equal to the cardinality of the set of production rules. A valid derivation tree of $G$ is a binary tree whose internal nodes are labelled by non-terminal symbols of $G$, and satisfies the following \textit{syntactic} constraints: \textit{(a)} The root node of the tree is labelled by $S$, \textit{(b)} For any internal node $N$ labelled by a non-terminal symbol $i$, the labels of its children nodes must correspond to a production rule in $R$ from left to right.
The binary language accepted by $G$, denoted $L_{G}$, is the set of strings $w \in \Sigma^{*}$ for which there exists at least one derivation tree whose leaf nodes are labelled by $w$ (reading from left to right). \\
Using close connections between CFGs and tree automata, an alternative algebric representation of CFGs will be particularly useful in the context of our work: 
\begin{definition}Let $G= <[n], \Sigma, R, S>$ be a CFG in Chomsky Form. The algebraic representation of $G$ is given by the tuple: $G = <[n], \pi, \mathcal{M}, \{\beta_{\sigma}\}_{\sigma \in \Sigma}>$, where: 
\begin{itemize}
\setlength{\itemsep}{1pt}
  \setlength{\parskip}{0pt}
  \setlength{\parsep}{0pt}
    \item $\pi \in \mathbb{R}^{n}$ is the initial vector equal to $e_{\{S\}}$,
    \item $\mathcal{M} \in \mathbb{R}^{n \times n \times n}$ is a tensor of order 3 where $\mathcal{T}(i,j,k) = 1$ if $i \rightarrow jk$ is in $R$, 0 otherwise, 
    \item $\beta_{\sigma} \in \mathbb{R}^{n}$ is a vector associated to each symbol $\sigma$ in the alphabet where $\beta_{\sigma}[i] = 1$ if $i \rightarrow \sigma$ is in $R$, 0 otherwise
\end{itemize}
\end{definition}
 The algebric representation of a CFG $G$ associates to each tree structure $t$ a vector $\beta_{t}$ using the following recursive formula: \\
$\bullet$ The tree associated to a terminal symbol $\sigma \in \Sigma$, denoted $t(\sigma)$, is equal to $\beta_{\sigma}$, \\
$\bullet$ For two trees $t_{1}, t_{2}$, the embedding vector of the tree formed by adding a new root node and binding $t_{1}$ (resp. $t_{2}$) in its left (resp. right) is equal to $\mathcal{M}_{1} (\beta_{t_{1}} \otimes \beta_{t_{2}})$ \\
$~~$ The language generated by the algebric representation of CFGs is given as: $f_{G}(w) = \pi^{T}.\sum\limits_{t: yield(t) = w} \beta_{t}$, which corresponds to
the number of valid derivation trees yielding a string $w \in \Sigma^{*}$.
This expression is not useful in practice as it involves the enumeration of all trees that yield a string $w$. We shall favor another way to compute $f_{G}$ that gives rise a more practical expression. Define the map $\beta_{G}: \Sigma^{*} \rightarrow \mathbb{R}^{n}$ where $\beta_{G}(w) = \sum\limits_{t: yield(t) = w} \beta_{t}$, we have:
\begin{equation} \label{betagw}
      \beta_{G}(w) = \mathcal{M}_{1} \cdot [ \sum\limits_{\substack{w_{1}, w_{2} \in \Sigma^{+} \\ w_{1}w_{2} = w}} \beta_{G}(w_{1}) \otimes \beta_{G}(w_{2})] 
\end{equation}
For $i \in [n]$, $\beta_{G}(w)[i]$ represents the number of derivation trees yielding $w$ whose root node is $i$. The expression \eqref{betagw} expresses the fact that a derivation tree whose yield $w$ is necessarly obtained by a concatenation of two substrees yielding a factorization of $w$. The growth function for a CFG $G$ is a mapping $\phi_{G}: \mathbb{N} \rightarrow \mathbb{N}$ such that $\phi_{G}(n) = \max\limits_{w \in \Sigma^{n}} f_{G}(w)$. The hierarchisation of CFGs according to their ambiguity degree was extensively studied in the literature as an attempt to identify subsclasses of CFGs with interesting theoretical and computational properties ( \cite{shamir71}, \cite{crestin72}, \cite{wich01, wich05}). We distinguish:
\begin{itemize}
  \setlength{\itemsep}{1pt}
  \setlength{\parskip}{0pt}
  \setlength{\parsep}{0pt}
    \item CFGs with constant ambiguity: $\sup\limits_{n \in \mathbb{N}} \phi_{G}(n) \leq k$, for some $k > 0$. If $k=1$, then the CFG is said to be \textit{unambiguous} (UCFG).
    \item CFGs with polynomial ambiguity (POLYCFG): $\phi_{G}(n) \leq \Theta(n^{k})$ for some $k > 0$,
    \item CFGs with exponential ambiguity (EXPCFG): $\phi_{G}(n) \leq 2^{\Theta(n)}$
\end{itemize}
\subsection{Hidden Markov Models:} 
The observable operator representation of a HMM is given by the triplet $A = <[n], \pi, \{A_{\sigma}\}_{\sigma \in \Sigma}>$, where $\pi$ is a probability distribution over $[n]$, and $\{A_{\sigma}\}_{\sigma \in \Sigma}$ are positive matrices in $\mathbb{R}^{n \times n}$ such that $\sum\limits_{\sigma \in \Sigma} A_{\sigma}$ is a stochastic matrix \footnote{A non-negative matrix $A \in \mathbb{R}^{n \times n}$ is said to be \textit{stochastic} if $\forall i \in [n]: \sum\limits_{j \in [n]} A[i,j] = 1 $}. The language induced by an HMM $A$ is given as: $f_{A}(w) = \pi^{T}.A_{w}.e_{[n]} = (e_{[n]} \otimes \pi)^{T}.vec(A_{w})$, where by definition $A_{w} \myeq \prod\limits_{i=1}^{|w|} A_{w_{i}}$.
For any $n \in \mathbb{N}$, the function induced by an HMM restricted to the support $\Sigma^{n}$ defines a probability distribution. To make the connection with CFGs, we shall design a tensorized representation of HMMs, derived from the observable operator representation. This tensorized representation is given in theorem \ref{tensorizedhmm}, and can be seen as a reshape of the HMM into a CFG given in algebric representation.
\begin{theorem} \label{tensorizedhmm}
 Let $A' = <[n'], \pi', \{A'_{\sigma}\}_{\sigma \in \Sigma}>$ be an HMM. For any string $w \in \Sigma^{*}$, for any factorization $w = w_{1}.w_{2}$, we have  
 $$ f_{A'}(w) = (e_{[n']} \otimes \pi')^{T} \cdot \mathcal{T}_{1} \cdot  (vec(A'_{w_{1}}) \otimes vec(A'_{w_{2}}))$$
 where the tensor $\mathcal{T} \in \mathbb{R}^{n'^{2} \times n'^{2} \times n'^{2}}$ is given as 
 $$\mathcal{T}(\phi(i_{1},j_{1}),\phi(i_{2},j_{2}),\phi(i_{3},j_{3})) \myeq \begin{cases} 1 & \text{if}~~ i_{1} = i_{2} \land j_{1} = j_{3} \land j_{2}=i_{3} \\ 0 & \text{otherwise}\end{cases}$$
 for $\phi:[n'] \times [n'] \rightarrow [n'^{2}]$ an (invertible) mapping such that $\phi(i,j) = (i-1) \cdot n + j$
\end{theorem}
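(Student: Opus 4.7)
The plan is to reduce the identity to the statement that the tensor $\mathcal{T}$ encodes the matrix-multiplication bilinear form. Indeed, from the observable-operator form we already have $f_{A'}(w) = \pi'^{T} A'_{w} e_{[n']} = (e_{[n']} \otimes \pi')^{T} \cdot vec(A'_{w})$, and by definition of $A'_{w}$ we can split $A'_{w} = A'_{w_{1}} \cdot A'_{w_{2}}$ along any factorization $w = w_{1}.w_{2}$. Therefore the theorem will follow once we establish the purely algebraic identity
\begin{equation*}
  vec(A'_{w_{1}} A'_{w_{2}}) \;=\; \mathcal{T}_{1} \cdot (vec(A'_{w_{1}}) \otimes vec(A'_{w_{2}})).
\end{equation*}

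To prove this identity I would expand both sides coordinate by coordinate. For a fixed pair $(i_{1},j_{1})$, the entry $(A'_{w_{1}} A'_{w_{2}})[i_{1},j_{1}]$ equals $\sum_{k} A'_{w_{1}}[i_{1},k] \cdot A'_{w_{2}}[k,j_{1}]$. On the right-hand side, evaluating the $\phi(i_{1},j_{1})$-th row of $\mathcal{T}_{1} \cdot (vec(A'_{w_{1}}) \otimes vec(A'_{w_{2}}))$ gives
\begin{equation*}
  \sum_{i_{2},j_{2},i_{3},j_{3}} \mathcal{T}\bigl(\phi(i_{1},j_{1}),\phi(i_{2},j_{2}),\phi(i_{3},j_{3})\bigr) \cdot A'_{w_{1}}[i_{2},j_{2}] \cdot A'_{w_{3}}[i_{3},j_{3}],
\end{equation*}
and the three conditions $i_{1}=i_{2}$, $j_{1}=j_{3}$, $j_{2}=i_{3}$ that make $\mathcal{T}$ non-zero collapse the sum exactly to the contraction $\sum_{k} A'_{w_{1}}[i_{1},k] \cdot A'_{w_{2}}[k,j_{1}]$ (with $k$ playing the role of the shared index $j_{2}=i_{3}$). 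Left-multiplying by $(e_{[n']} \otimes \pi')^{T}$ then sums these entries weighted by $\pi'[i_{1}]$ across all $(i_{1},j_{1})$, which recovers $\pi'^{T} A'_{w} e_{[n']} = f_{A'}(w)$, concluding the proof.

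The substantive part of the argument is the entry-wise verification above; the rest is definitional. The main obstacle I anticipate is purely bookkeeping: the paper mixes the column-major convention for $vec$ with the row-major convention for $\phi$, and the mode-1 matriciziation of $\mathcal{T}$ stacks the columns in inverse lexicographic order. One has to check carefully that the alignment between the row index of $\mathcal{T}_{1}$ and the double index of $vec(A'_{w_{1}} A'_{w_{2}})$, as well as between the column index of $\mathcal{T}_{1}$ and the double index of $vec(A'_{w_{1}}) \otimes vec(A'_{w_{2}})$, is consistent; once the indexing conventions are pinned down the identity follows immediately from the support pattern of $\mathcal{T}$.
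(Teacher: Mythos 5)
Your proposal is correct and follows essentially the same route as the paper: the paper's Appendix~A proof also reduces the theorem to the fact that $\mathcal{T}$ encodes the matrix-multiplication bilinear map (its Lemma on $vec(AB)$), verified entrywise via the support conditions $i_{1}=i_{2}$, $j_{1}=j_{3}$, $j_{2}=i_{3}$, and then contracts with $(e_{[n']}\otimes\pi')^{T}$; the only cosmetic difference is that the paper phrases the key identity with mode-$2$/mode-$3$ products before converting to the matricized form $\mathcal{T}_{1}\cdot(vec(A'_{w_{1}})\otimes vec(A'_{w_{2}}))$, whereas you work with the matricization directly. (The occurrence of $A'_{w_{3}}$ in your displayed sum is evidently a typo for $A'_{w_{2}}$ and does not affect the argument.)
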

The proof of theorem \ref{tensorizedhmm} is given in appendix A. It relies mostly on careful algebric manipulation of matrices and tensors. Note that the tensor $\mathcal{T}$ is a sparse binary tensor whose number of non-zero elements is equal to $n'^{3}$.
\section{The likelihood problem for unambiguous context-free grammars}
The $UCFG$-Likelihood problem can be formally defined as follows: \\ 
 Let $G = <[n], \pi, \mathcal{M}, \{\beta_{\sigma}\}_{\sigma \in \Sigma}>$ be an algebric description of a UCFG, $A' = <[n'], \pi', \{A'_{\sigma}\}_{\sigma \in \Sigma}>$ be an HMM, and $L > 0$ an integer, the problem is to compute $f_{A'}(L_{G} \cap \Sigma^{L})$. \\
In this section, we shall derive a dynamic algorithm for solving the $UCFG$-Likelihood problem. For this, we first rewrite the quantity of interset $f_{A'}(L_{G} \cap \Sigma^{L})$ in a suitable algebric form. We have, for any $L > 0$
\begin{align}
    f_{A'}(L_{G} \cap \Sigma^{L}) &= \sum\limits_{w \in \Sigma^{L}} \delta_{L_{G}}(w) \cdot f_{A'}(w) \nonumber \\
    &= \sum\limits_{w \in \Sigma^{L}} [\pi^{T} \cdot \beta_{G}(w)]. [(e_{[n']} \otimes \pi')^{T} \cdot vec(A'_{w})] \nonumber \\
    &= \sum\limits_{w \in \Sigma^{L}} (\pi \otimes e_{ [n']} \otimes \pi')^{T} \cdot (\beta_{G}(w) \otimes vec(A'_{w})) \nonumber \\
    &= (\pi \otimes e_{[n']} \otimes \pi')^{T} \cdot \beta_{G,A'}^{(L)} \label{betasum}
\end{align}
where $\beta_{G,A'}^{(L)} \myeq \sum\limits_{w \in \Sigma^{L}} \beta_{G}(w) \otimes vec(A'_{w})$.
The transition from the first to the second equality is due to the fact that, for UCFGs, $f_{G} = \delta_{L_{G}}$, and the third equality is due to the mixed-product property of the kronecker product (i.e. $(AC) \otimes (BD) = (A \otimes B).(C \otimes D)$). 
The expression \eqref{betasum} reduces the computation of the target quantity $f_{A'}(L_{G} \cap \Sigma^{L})$ to the computation of the vector $\beta_{G,A'}^{(L)}$. A recursive formula to obtain this vector is given in the following theorem
\begin{theorem} \label{theoremucfg}
  Let $G = <[n], \pi, \mathcal{M}, \{\beta_{\sigma}\}_{\sigma \in \Sigma}>$ be a UCFG and  $A' = <[n'], \pi', \{A'_{\sigma}\}_{\sigma \in \Sigma}>$ be an HMM. We have \\
  $\bullet ~~\beta_{G,A'}^{(1)} = \sum\limits_{\sigma \in \Sigma} \beta_{\sigma} \otimes vec(A'_{\sigma})$ \\
  $\bullet$ For $L > 0$:
     $~\beta_{G,A'}^{(L)} = (\mathcal{M} \otimes \mathcal{T})_{1} \cdot P \cdot \sum\limits_{i=1}^{L-1}  (\beta_{G,A'}^{(i)} \otimes \beta_{G,A'}^{(L-i)})$ \\
  where $P \in \mathbb{R}^{n'^{4}.n^{2} \times n'^{4}.n^{2}}$ is a permutation matrix that induces the linear map
  $$\forall (i,j,k,l) \in [n]\times[n'^{2}]\times [n]\times [n'^{2}]:~~P \cdot (e_{i}^{(n)} \otimes e_{j}^{(n'^{2})} \otimes e_{k}^{(n)} \otimes e_{l}^{(n'^{2}}) = e_{i}^{(n)} \otimes e_{k}^{(n)} \otimes e_{j}^{(n'^{2})} \otimes e_{l}^{(n'^{2})}$$
\end{theorem}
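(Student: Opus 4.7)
The base case $L=1$ is immediate: the only strings are single symbols $\sigma$, and by the algebraic representation $\beta_G(\sigma) = \beta_\sigma$ while $A'_\sigma$ is the elementary HMM operator, so the sum collapses to $\sum_{\sigma \in \Sigma} \beta_\sigma \otimes vec(A'_\sigma)$. For the recursive case my plan is to start from the claimed RHS and rewrite it back into the definition $\sum_{w \in \Sigma^L} \beta_G(w) \otimes vec(A'_w)$, using the recursion \eqref{betagw} for $\beta_G$ together with a matrix-product-in-$vec$-form identity implicit in the construction of $\mathcal{T}$ from Theorem \ref{tensorizedhmm}.

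Concretely, I would first expand
\[ \beta_{G,A'}^{(i)} \otimes \beta_{G,A'}^{(L-i)} = \sum_{u \in \Sigma^{i},\, v \in \Sigma^{L-i}} \bigl(\beta_G(u) \otimes vec(A'_u)\bigr) \otimes \bigl(\beta_G(v) \otimes vec(A'_v)\bigr), \]
whose summands are Kronecker products of four blocks in the order $(\beta_G(u), vec(A'_u), \beta_G(v), vec(A'_v))$. The permutation $P$ is defined precisely to swap the second and third blocks, so $P$ regroups each summand as $(\beta_G(u) \otimes \beta_G(v)) \otimes (vec(A'_u) \otimes vec(A'_v))$. Using $(\mathcal{M} \otimes \mathcal{T})_{1} = \mathcal{M}_1 \otimes \mathcal{T}_1$ (from the definition of Kronecker product of tensors recalled in Section 1) together with the mixed-product property, applying $(\mathcal{M} \otimes \mathcal{T})_{1}$ yields $[\mathcal{M}_1(\beta_G(u) \otimes \beta_G(v))] \otimes [\mathcal{T}_1(vec(A'_u) \otimes vec(A'_v))]$.

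To finish I would record the identity $\mathcal{T}_1(vec(A) \otimes vec(B)) = vec(AB)$ for any $n' \times n'$ matrices $A, B$; this is a one-line index calculation from the definition of $\mathcal{T}$ (it simply recovers $\sum_k A[i,k]B[k,j]$) and is the strengthening of Theorem \ref{tensorizedhmm} needed here. With $A = A'_u$, $B = A'_v$, the second factor becomes $vec(A'_{uv})$. Re-indexing the double sum over $i \in [L-1]$ and $(u,v) \in \Sigma^{i} \times \Sigma^{L-i}$ as a single sum over $w \in \Sigma^L$ paired with its $L-1$ nontrivial factorizations, the inner sum becomes exactly $\mathcal{M}_1 \cdot \sum_{(u,v):\, uv = w} \beta_G(u) \otimes \beta_G(v) = \beta_G(w)$ by \eqref{betagw}, delivering $\sum_{w \in \Sigma^L} \beta_G(w) \otimes vec(A'_w) = \beta_{G,A'}^{(L)}$.

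The main obstacle is bookkeeping of Kronecker factor orderings: checking that $P$ as specified in the statement is literally the permutation that mediates between the grouping natural in $\beta_{G,A'}^{(i)} \otimes \beta_{G,A'}^{(L-i)}$ and the grouping required to factor the operator as $\mathcal{M}_1 \otimes \mathcal{T}_1$, and that mode-$1$ matricization commutes with Kronecker product in the way asserted. Note in passing that unambiguity of $G$ is nowhere used in this derivation; it only enters upstream, in the reduction of the likelihood $f_{A'}(L_G \cap \Sigma^L)$ to $\beta_{G,A'}^{(L)}$ via $f_G = \delta_{L_G}$.
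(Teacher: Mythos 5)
Your proposal is correct and follows essentially the same route as the paper's proof: the same chain of equalities built on the recursion \eqref{betagw}, the identity $\mathcal{T}_{1}\cdot(vec(A)\otimes vec(B)) = vec(AB)$ (the paper's Lemma \ref{technicallemma}), the mixed-product property, and the block-swapping role of $P$, merely read from the right-hand side back to the definition instead of the other way around. Your observations that the needed fact is the vector-level strengthening of Theorem \ref{tensorizedhmm} and that unambiguity is not used in this recursion are both accurate and consistent with the paper.
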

\begin{proof}
 The first point of the theorem is straightforward from the definition of $\beta_{G,A'}^{(L)}$. Let's prove the second point, we have:
 \begin{align*}
     \beta_{G,A'}^{(L)} &= \sum\limits_{w \in \Sigma^{L}} \beta_{G}(w) \otimes vec(A'_{w}) \\ 
     &= \sum\limits_{w \in \Sigma^{L}} [\sum_{\substack{w_{1},w_{2} \in \Sigma^{+} \\ w_{1}.w_{2} = w}} \mathcal{M}_{1} \cdot (\beta_{G}(w_{1}) \otimes \beta_{G}(w_{2})] \otimes vec(A'_{w}) \\
     &= \sum\limits_{w \in \Sigma^{L}} \sum_{\substack{w_{1},w_{2} \in \Sigma^{+} \\ w_{1}.w_{2} = w}} (\mathcal{M}_{1} \cdot ( \beta_{G}(w_{1}) \otimes \beta_{G}(w_{2})) \otimes (\mathcal{T}_{1} \cdot  vec(A'_{w_{1}}) \otimes vec(A'_{w_{2}})) \\
     &= \sum\limits_{i=1}^{L-1} \sum\limits_{w_{1} \in \Sigma^{i}} \sum\limits_{w_{2} \in \Sigma^{L-i}} (\mathcal{M}_{1} \otimes \mathcal{T}_{1}) \cdot (\beta_{G}(w_{1}) \otimes \beta_{G}(w_{2}) \otimes vec(A'_{w_{1}}) \otimes vec(A'_{w_{2}})) \\
     &= \sum\limits_{i=1}^{L-1} \sum\limits_{w_{1} \in \Sigma^{i}} \sum\limits_{w_{2} \in \Sigma^{L-i}} (\mathcal{M} \otimes \mathcal{T})_{1} \cdot P \cdot (\beta_{G}(w_{1}) \otimes vec(A'_{w_{1}}) \otimes \beta_{G}(w_{2}) \otimes vec(A'_{w_{2}})) \\
     &= (\mathcal{M} \otimes \mathcal{T})_{1} \cdot P \cdot \sum\limits_{i=1}^{L-1} \sum\limits_{w_{1} \in \Sigma^{i}} \sum\limits_{w_{2} \in \Sigma^{L-i}}  (\beta_{G}(w_{1}) \otimes vec(A'_{w_{1}}) \otimes \beta_{G}(w_{2}) \otimes vec(A'_{w_{2}})) \\
     &= (\mathcal{M} \otimes \mathcal{T})_{1} \cdot P \cdot \sum\limits_{i=1}^{L-1} \beta_{G,A'}^{(i)} \otimes \beta_{G,A'}^{(L-i)} 
 \end{align*}
 where the second equality is due to \eqref{betagw}, the third equality is due to lemma \ref{technicallemma} and the associativity property of the kronecker product, and the fourth equality is due to the mixed-product property of the kronecker product. 
\end{proof}
Due to space constraints, we omit the presentation of the detailed pseudo-code for the $UCFG$-Likelihood problem. Basically, the idea will follow the dynamic programming paradigm by constructing iteratively the forward vectors $\{\beta_{G,A'}^{(l)}\}_{l \in [L]}$from $1$ to $L$ according to the recursive formula in theorem \ref{theoremucfg}. A suitable datastructure $H$ (e.g. an array of size $L.n.n'^{2}$) is used to store the intermediate forward vectors. \\
$\bullet$ \textbf{Complexity:} The base case will require $O(|G|\cdot n'^{2} \cdot |\Sigma|)$ operations of addition and multiplication to compute. The most expensive operation in the algorithm is due to the computation of the expression of the form $(\mathcal{M} \otimes \mathcal{T})_{1} \cdot P \cdot (\beta_{G,A'}^{(i)} \otimes \beta_{G,A'}^{(l-i)})$ $l-1$ times to form $\beta_{G,A'}^{(l)}$. A naive way of implementing this operation would be too costly. However, one observes that tensors/matrices involved in this expression enjoy a binary sparse structure that can be leveraged to drastically speed up the computation: $P$ is a permutation matrix which doesn't need to be explicitly constructed, and can be replaced algorithmically by a procedure that maps each dimension to its image. The tensor $\mathcal{T}$ doesn't have to be constructed neither, as its structure as given in theorem \ref{tensorizedhmm} suggests its implementation with 3 nested iterators. The tensor $\mathcal{M}$ is a binary tensor whose number of non-zero elements is smaller than $|G|$. By exploiting these structures, the complexity of computing an expression of the form $(\mathcal{M} \otimes \mathcal{T})_{1} \cdot P \cdot (\beta_{G,A'}^{(i)} \otimes \beta_{G,A'}^{(l-i)})$ would require $O(|G| \cdot n'^{3})$ operations of addition and multiplication. This operation will be performed $l$ times at each iteration $l \in [L]$. Therefore, the overall complexity is $O(|G| \cdot n'^{2} \cdot \max(n'.L^{2}, |\Sigma|))$. The space complexity cost is due to the storage of intermediary results in the datastructure $H$, and it takes $O(n'^{2} \cdot n \cdot L)$.\\
$\bullet$ \textbf{Note.} \textit{
  The closely related problem of sampling UCFGs from HMMs aims at drawing strings $w \in \Sigma^{L}$ according to the probability distribution $f_{G,A'}^{(L)}(w) = \frac{f_{A'}(w)}{f_{A' }(L_{G} \cap \Sigma^{L})} \cdot \delta_{L_{G}}(w)$, when $f_{A'}(L_{G} \cap \Sigma^{L}) \neq 0$. As pointed out by \cite{lafferty91}, the next token probability distribution can be reduced to the computation of prefix probabilities. Same algebric techniques applied in this section can be used to derive a polynomial-time dynamic algorithm to compute prefix probabilities, hence an efficient exact sampler for $f_{G,A'}^{(L)}$. Due to space constraints, we will omit the description of details of the construction of such sampler. We shall assume in the sequel the existence of a polynomial-time running procedure that takes as input a UCFG $G$, an HMM $A'$, and an integer $L$ and draws a sample according to $f_{G,A'}^{(L)}$, and we'll call it $UCFG$-Sample().
}
\section{The likelihood problem of ambiguous CFGs in HMMs}
\subsection{The Hardness of the CFG-Likelihood problem for ambiguity degree less than 2}
In the previous section, the unambiguity property of UCFGs played a crucial role to derive an efficient procedure for the $UCFG$-Likelihood problem. Implementing naively the UCFG-Likelihood algorithm presented in the previous section for arbitrary CFGs will lead to overcounting. On the other hand, it's a well-known fact that for arbitrary CFGs, the likelihood problem is NP-Hard (\cite{kannan95}). 
\begin{algorithm}[]
  \scriptsize
\KwInput{A POLYCFG $G = <[n], \pi, \mathcal{M}, \{\beta_{\sigma}\}_{\sigma \in \Sigma})$ with degree of ambiguity $\Theta(n^{k})$ for a given $k > 0$, an HMM $A' = <[n'], \pi', \{A'_{\sigma}\}_{\sigma \in \Sigma}>$, an integer $L$, $\epsilon \in (0,1)$}
\KwOutput{Approximate of $f_{A'}(L_{G} \cap \Sigma^{L})$}
  $\tilde{Z} \leftarrow \text{UCFG-Likelihood}(G,M,L)$ \\
  \eIf {$\tilde{Z} = 0$}{
     \Return 0
  }{
    $N \leftarrow O((\frac{L^{k}}{\epsilon})^{2})$ \\
    $\tilde{p}_{N} \leftarrow 0 $ \\
    \ForEach{$i$ from $1$ to $O(N)$}{
      $w \leftarrow UCFG-Sample(G,A', L)$ \\
      Compute $f_{G}(w)$ using the recursive formula \eqref{betagw}\\
      $X \sim \text{Bernoulli}(\frac{1}{f_{G}(w)})$  \\
      \If{$X = 1$}{
        $\tilde{p}_{N} \leftarrow \tilde{p}_{N} + 1$
      }
    }
    \Return $\frac{\tilde{p}_{N}.\tilde{Z}}{N}$  }
\caption{POLYCFG-Likelihood}
\end{algorithm}
This suggests that the ambiguity degree of CFGs is intrinsically related to the complexity of the problem. This raises a legitimate question on the complexity class to which belongs the likelihood problem for different subclasses of CFGs according to their ambiguity degree hierarchy. Unfortunately, even with the promise that the ambiguity degree of the CFG is less or equal to 2, the likelihood problem becomes computationnaly hard:
\begin{theorem} \label{hardness}
   Let $G$ be a CFG with an ambigutiy degree less or equal to 2, let $A'$ be an HMM of size $n'$, and $L$ be an integer, there exists no algorithm that runs in polynomial time with respect to the size of $G$, $n'$ and $L'$ and outputs $f_{A'}(L_{G} \cap \Sigma^{L})$ unless $P=NP$.
\end{theorem}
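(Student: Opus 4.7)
The plan is to reduce the NP-hard problem of deciding whether the intersection of two polynomial-size unambiguous CFGs contains a string of a given polynomial length $L$ to the ambiguity-$\leq 2$ likelihood problem. This source hardness is classical: starting from Cook--Levin, an accepting computation of a polynomial-time nondeterministic Turing machine on a SAT instance $\phi$ is encoded as a polynomial-length string $c_0 \# c_1^R \# c_2 \# c_3^R \# \cdots$ of alternating (reversed) configurations. One UCFG $G_1$ checks the consistency of even-indexed pairs $(c_{2i}, c_{2i+1})$ via palindrome-style productions, and a second UCFG $G_2$ checks the odd-indexed pairs; their intersection (restricted to the appropriate length) is exactly the set of accepting computations, hence non-empty iff $\phi$ is satisfiable.

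Given $G_1, G_2$, I would build a single CFG $G$ in Chomsky normal form as follows: rename non-terminals so those of $G_1$ and $G_2$ are disjoint, introduce a fresh start symbol $S$, and for every top-level production $S_i \to BC$ of $G_i$ add $S \to BC$ to $G$ while keeping every other production of $G_1$ and $G_2$. The resulting grammar has polynomial size, satisfies $L_G = L_{G_1} \cup L_{G_2}$, and gives $f_G(w) = f_{G_1}(w) + f_{G_2}(w) \in \{0, 1, 2\}$ for every $w$; thus $G$ has ambiguity degree at most two. For the HMM I take the single-state uniform model $A'$ with $n'=1$, $\pi' = [1]$ and $A'_\sigma = [1/|\Sigma|]$, so that $f_{A'}(w) = |\Sigma|^{-L}$ on $\Sigma^L$ and $f_{A'}(L_G \cap \Sigma^L) = |L_G \cap \Sigma^L|\cdot|\Sigma|^{-L}$.

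The reduction is then closed by inclusion--exclusion:
$$|L_{G_1} \cap L_{G_2} \cap \Sigma^L| \;=\; |L_{G_1} \cap \Sigma^L| \;+\; |L_{G_2} \cap \Sigma^L| \;-\; |L_G \cap \Sigma^L|.$$
The first two right-hand quantities are computable in polynomial time by invoking the UCFG-Likelihood algorithm of Section 2 on $(G_i, A', L)$ and multiplying by $|\Sigma|^L$. A hypothetical polynomial-time algorithm for the ambiguity-$\leq 2$ case would produce the third quantity analogously, hence decide emptiness of $L_{G_1} \cap L_{G_2} \cap \Sigma^L$ in polynomial time, contradicting $\mathrm{P} \neq \mathrm{NP}$.

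The main obstacle is ensuring that the source hardness really is stated for \emph{unambiguous} grammars: the Turing-machine-simulation template is textbook material for the undecidability of CFL intersection, but here one must verify that each $G_i$ admits a unique derivation tree per accepted string. This is achievable by making the parse deterministic from the positions of the separator $\#$ (so the grammar is in fact a deterministic CFG, hence unambiguous), and by carefully decomposing the palindromic matchings between adjacent configurations; this bookkeeping is the least routine part of the argument. Everything that follows is algebraic manipulation combined with the UCFG-Likelihood procedure of Section 2 used as a black-box subroutine.
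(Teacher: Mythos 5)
Your proposal is correct and follows the same outer skeleton as the paper: reduce the bounded non-emptiness intersection problem for two UCFGs to the ambiguity-$\leq 2$ likelihood problem by forming the union grammar with a fresh start symbol (ambiguity at most $2$, with $f_G = f_{G_1}+f_{G_2}$), taking the trivial single-state uniform HMM, and applying inclusion--exclusion so that the unknown intersection count is recovered from two UCFG-Likelihood calls plus one call to the hypothetical ambiguity-$2$ algorithm. Where you diverge is in establishing NP-hardness of the bounded intersection problem itself: you encode accepting computations of a polynomial-time NTM directly as strings of alternating (reversed) configurations and let $G_1$ check the even-indexed transitions and $G_2$ the odd-indexed ones, whereas the paper routes through a bounded variant of the Post Correspondence Problem (NP language $\rightarrow$ NTM $\rightarrow$ bounded PCP via the classical undecidability construction, then bounded PCP $\rightarrow$ UCFG intersection via the classical PCP-to-CFG-intersection grammars). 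Both are legitimate classical templates made quantitative; yours is arguably more direct, skipping PCP entirely, at the price of having to argue unambiguity of the two configuration-checking grammars yourself (which you correctly flag as the delicate bookkeeping; it works because the separators make each block decomposition and each outside-in matching unique), while the paper inherits unambiguity from the standard PCP grammars and only has to bound lengths by $O(p_A^2(|w|))$. One small point to tighten in your version: the theorem concerns the exact-length quantity $f_{A'}(L_G\cap\Sigma^{L})$, so you must either pad configurations and stall the accepting state so that all witness strings have one known length, or, as the paper does, invoke the hypothetical exact-length algorithm $L$ times to accumulate $f_{A'}(L_G\cap\Sigma^{\leq L})$; as written, "restricted to the appropriate length" leaves this step implicit.
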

The proof of this theorem can be found in appendix B. It's based on a reduction from the bounded non-emptiness intesection problem between UCFGs, which in turn is reduced to a variant of the bounded Post-Correspondence problem.
\subsection{The POLYCFG-Likelihood problem is FPRAS}
 In this section, we shall give a randomized procedure to approximate the likelihood for the class of POLYCFGs in HMMs with strong probabilistic guarantees. An optimization/counting problem $P$ is said to admit a fully polynomial randomized approximation scheme (FPRAS) if there exists a procedure that takes as input an instance $I$ of the problem, an error parameter $\epsilon \in (0,1)$, runs in time polynmial in the size of the instance and $\frac{1}{\epsilon}$ and outputs a value $OUT$ that satisfies: $ (1 - \epsilon).OPT \leq OUT \leq (1 + \epsilon).OPT$
 with probability greater than $\frac{3}{4}$. $OPT$ refers to the exact output for the instance $I$. 
Algorithm 1 illustrates a randomized algorithm that provides FPRAS guarantees (Theorem \ref{fpras}) for the $POLYCFG$-Likelihood problem. The algorithm uses the $UCFG$ sampler as a proposal distribution in the framework of a accept/reject scheme by introducing a bernoulli random variable, denoted $X$ in the pseudo-code.
The following theorem \ref{fpras} proves the FPRAS guarantees of algorithm 1. The proof can be found in Appendix C.
\begin{theorem} \label{fpras}
  Let $G$ be a POLYCFG with degree of ambiguity $\Theta(n^{k})$ for a constant $k > 0$, and an HMM $A'$. We have
  \begin{enumerate}
      \setlength{\itemsep}{1pt}
  \setlength{\parskip}{0pt}
  \setlength{\parsep}{0pt}
      \item $\mathbb{P}(X = 1 ) \geq \Omega(\frac{1}{L^{k}})$,
      \item For any $\epsilon \in (0,1)$, with probability greater than $\frac{3}{4}$, algorithm 2 outputs a quantity $\tilde{Z}$ that satisfies  $(1 - \epsilon).f_{A'}(L_{G} \cap \Sigma^{L}) \leq \tilde{Z} \leq (1 + \epsilon).f_{A'}(L_{G} \cap \Sigma^{L}) $,
      when the precision parameter given in the input is equal to $\epsilon$
  \end{enumerate}
\end{theorem}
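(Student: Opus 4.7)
The plan is to recognise that Algorithm~1 is an importance-sampling scheme that corrects for the overcounting incurred when the UCFG-Likelihood and UCFG-Sample routines of Section~2 are applied to an \emph{ambiguous} grammar. The transition in \eqref{betasum} from the first to the second line relied on the identity $f_G = \delta_{L_G}$, which holds only for UCFGs. When $G$ is ambiguous, the same algebraic derivation shows that UCFG-Likelihood$(G,A',L)$ actually returns $\tilde Z = \sum_{w \in \Sigma^L} f_G(w)\,f_{A'}(w)$, and by the same token UCFG-Sample$(G,A',L)$ draws $w$ from $f_G(w)\,f_{A'}(w)/\tilde Z$, supported on $L_G \cap \Sigma^L$ since $f_G(w)=0$ off the language. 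The Bernoulli acceptance factor $1/f_G(w)$ is designed precisely to cancel this extra $f_G(w)$ weighting.

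For claim~(1), I would condition on the sample $w$ and compute
\[ \mathbb{P}(X=1) \;=\; \sum_w \frac{f_G(w) f_{A'}(w)}{\tilde Z} \cdot \frac{1}{f_G(w)} \;=\; \frac{f_{A'}(L_G \cap \Sigma^L)}{\tilde Z}. \]
The polynomial ambiguity hypothesis gives $f_G(w) \leq \phi_G(L) = O(L^k)$ uniformly in $w \in \Sigma^L$, so
\[ \tilde Z \;=\; \sum_{w \in L_G \cap \Sigma^L} f_G(w)\,f_{A'}(w) \;\leq\; O(L^k) \cdot f_{A'}(L_G \cap \Sigma^L), \]
from which $\mathbb{P}(X=1) \geq \Omega(1/L^k)$ follows. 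The degenerate case $f_{A'}(L_G \cap \Sigma^L) = 0$ forces $\tilde Z=0$ (since $f_G \geq \delta_{L_G}$ pointwise), which is caught by the algorithm's early return.

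For claim~(2), the estimator returned by the algorithm is $\tilde p_N \tilde Z / N$, which is unbiased for $f_{A'}(L_G \cap \Sigma^L)$ because $\tilde p_N$ is a sum of $N$ i.i.d.\ Bernoulli$(p)$ trials with $p = \mathbb{P}(X=1)$. I would apply the standard multiplicative Chernoff bound,
\[ \mathbb{P}\bigl(|\tilde p_N - Np| > \epsilon N p\bigr) \;\leq\; 2 \exp\bigl(-N p \epsilon^2 / 3\bigr), \]
and combine it with the lower bound from~(1): $N p = \Omega(L^{2k}/\epsilon^2) \cdot \Omega(L^{-k}) = \Omega(L^k/\epsilon^2)$, whence $N p \epsilon^2 = \Omega(L^k) = \Omega(1)$, and with the hidden constant in $N$ absorbing $3\ln 8$ the failure probability is at most $1/4$. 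Multiplying the concentration inequality through by $\tilde Z / N$ converts this into the multiplicative $(1 \pm \epsilon)$-approximation of $f_{A'}(L_G \cap \Sigma^L)$ required for FPRAS.

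The main obstacle is conceptual rather than analytical: one has to identify that running the \emph{unambiguous} routines on an ambiguous grammar is exactly what produces the $f_G$-weighted count and the $f_G$-weighted sampling distribution needed to make $1/f_G(w)$ an unbiased importance-sampling correction. Once this is in place the proof collapses to a one-line expectation identity, an $O(L^k)$ bound from the growth function, and a routine Chernoff estimate. A supporting check, needed for the polynomial-time claim, is that $f_G(w)$ for a fixed $w$ is computable by iterating the recurrence \eqref{betagw} over the $O(L^2)$ subintervals of $w$, which runs in time polynomial in $|w|$ and $|G|$; combined with $N = O(L^{2k}/\epsilon^2)$, the total running time of Algorithm~1 is polynomial in $|G|$, $n'$, $L$ and $1/\epsilon$.
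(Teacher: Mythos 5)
Your proposal is correct and follows essentially the same route as the paper: the same identity $\mathbb{P}(X=1)=f_{A'}(L_G\cap\Sigma^{L})/\sum_{w}f_G(w)f_{A'}(w)$, the same $f_G(w)\le O(L^{k})\,\delta_{L_G}(w)$ bound for part (1), and a routine concentration argument with $N=O((L^{k}/\epsilon)^{2})$ for part (2), where the paper uses additive Hoeffding converted to relative error via part (1) while you invoke a multiplicative Chernoff bound — an immaterial difference. Your explicit observation that running UCFG-Likelihood and UCFG-Sample on an ambiguous $G$ yields $\tilde Z=\sum_{w}f_G(w)f_{A'}(w)$ and sampling proportional to $f_G(w)f_{A'}(w)$ is exactly what the paper's proof assumes implicitly, and your added runtime check is a welcome (if unrequired) complement.
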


\section*{Conclusion}
We analyzed in this article the problem of computing the likelihood of languages generated by CFGs in HMMs. We showed that the computational complexity of this problem is closely related to the ambiguity of the CFGs. First, we proposed an efficient dynamic algorithm for the likelihood  problem for the case of UCFGs. Then, we showed that the likelihood problem is NP-hard even for the subclass of CFGs with an ambiguity degree smaller or equal to 2. We gave afterwards a randomized algorithm for approximating the likelihood of polynomially ambiguous CFGs with FPRAS guarantees. It's left as an open question if there exists a FPRAS algorithm for the class of exponentially ambiguous CFGs. Another interesting line of research would be to analyze variants of the problem for other classes of sequential probabilistic models such as stochastic context-free grammars and neural-based sequential models such as RNNs and transformers. $\Omega$ 
\bibliography{biblio}

\appendix
\section{}
\textbf{Proof of theorem \ref{tensorizedhmm}} \\
The mode-k vector product of a tensor $\mathcal{T} \in \mathbb{R}^{n_{1} \times ..\times n_{d}}$ with a vector $v \in \mathbb{R}^{n_{k}}$ is a tensor of order $d-1$, denoted $\mathcal{T} \times_{k} v$, such that 
$$ (\mathcal{T} \times_{k} v)[i_{1},..,i_{k-1},i_{k},..,i_{d}] = \sum\limits_{i_{k}=1}^{n_{k}} \mathcal{T}[i_{1},..,i_{d}].v[i_{k}]$$
The mode-k operation is commutative, i.e. for any $(k,k') \in [d]^{2}$ such that $k \neq k'$, we have $\mathcal{T} \times_{k} v \times_{k'} v' = \mathcal{T} \times_{k'} v' \times_{k} v $. For the proof of theorem 1, we will use the following fact: For a tensor $\mathcal{T} \in \mathbb{R}^{n \times n \times n}$ and two vectors $v,~v'$ in $\mathbb{R}^{n}$, we have 
\begin{align} \label{modek}
\mathcal{T} \times_{2} v \times_{3} v' = \mathcal{T}_{1}.(v \bigotimes v')
\end{align}
 Now, we are ready to prove theorem \ref{tensorizedhmm}. We start by proving the following technical lemma:
\begin{lemma} \label{technicallemma}
 Let $A, B$ be two matrices in $\mathbb{R}^{n \times n}$. We have 
 $$ vec(AB) = \mathcal{T} \times_{2} vec(A) \times_{3} vec(B)$$
 where $\mathcal{T} \in \mathbb{R}^{n^{2} \times n^{2} \times n^{2}}$ such that 
 $$\mathcal{T}(\phi(i_{1},j_{1}),\phi(i_{2},j_{2}),\phi(i_{3},j_{3})) \myeq \begin{cases} 1 & \text{if}~~ i_{1} = i_{2} \land j_{1} = j_{3} \land j_{2}=i_{3} \\ 0 & \text{otherwise}\end{cases}$$
 where $\phi:[n] \times [n] \rightarrow [n^{2}]$ is a function such that $\phi(i,j) = (i-1).n + j$
\end{lemma}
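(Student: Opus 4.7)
The plan is to verify the claimed tensor identity component-by-component. The right-hand side is a vector in $\mathbb{R}^{n^{2}}$, so I would fix an arbitrary index $\phi(i_1, j_1)$ with $(i_1, j_1) \in [n]^{2}$ and show that both sides agree at that coordinate. The essential idea is that the extreme sparsity of $\mathcal{T}$ (exactly $n^{3}$ non-zeros among $n^{6}$ entries) precisely encodes the index-matching pattern of the standard matrix product.

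For the left-hand side, unfolding $vec$ together with the usual matrix-product formula gives
\begin{align*}
vec(AB)[\phi(i_1,j_1)] = (AB)[i_1,j_1] = \sum_{k=1}^{n} A[i_1,k]\, B[k,j_1].
\end{align*}
For the right-hand side, expanding the two mode-$k$ vector products at the same output index yields the quadruple sum
\begin{align*}
\sum_{i_2,j_2,i_3,j_3} \mathcal{T}[\phi(i_1,j_1),\phi(i_2,j_2),\phi(i_3,j_3)] \cdot vec(A)[\phi(i_2,j_2)] \cdot vec(B)[\phi(i_3,j_3)].
\end{align*}
I would then invoke the definition of $\mathcal{T}$: the summand vanishes unless $i_2 = i_1$, $j_3 = j_1$, and $j_2 = i_3$. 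Enforcing these three constraints collapses the quadruple sum to a single free index $k := j_2 = i_3$, producing $\sum_{k} A[i_1,k]\,B[k,j_1]$, which matches the left-hand side. Since $\phi$ is a bijection, coordinatewise equality at every $\phi(i_1,j_1)$ delivers the full vector identity.

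The only obstacle worth naming is notational bookkeeping: one must be careful that the three equality conditions defining $\mathcal{T}$ are read correctly as (i) $A$ contributes its row index as the row of the product, (ii) $B$ contributes its column index as the column of the product, and (iii) the column index of $A$ is contracted against the row index of $B$. Once this reading is made explicit, the lemma follows by routine substitution with no further algebraic machinery required, and equation \eqref{modek} then allows the mode-$k$ form to be rewritten as $\mathcal{T}_{1}\cdot(vec(A)\otimes vec(B))$ wherever needed downstream.
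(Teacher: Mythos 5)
Your proof is correct and follows essentially the same route as the paper: fix a coordinate $\phi(i_1,j_1)$, expand the mode-$2$ and mode-$3$ products as a quadruple sum, use the sparsity pattern of $\mathcal{T}$ to collapse it to the single contraction index $k$, and match against $(AB)[i_1,j_1]$. Nothing further is needed.
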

\begin{proof}
 Note that the function $\phi$ maps the position of an element in a matrix $A$ in $\mathbb{R}^{n \times n}$ to its corresponding position in its vectorized form $vec(A)$ (i.e. $A[i,j] = vec(A)[\phi(i,j)]$. We have, for any $(i,j) \in [n] \times [n]$,
 \begin{align*}
     (\mathcal{T} \times_{2} vec(A) \times_{3} vec(B))[\phi(i,j)] &= \sum\limits_{(i_{2},j_{2},i_{3},j_{3}) \in [n]^{4}} \mathcal{T}[\phi(i,j), \phi(i_{2},j_{2}), \phi(i_{3},j_{3})]. \\
     &~~~~~~~~~~~~~~~~~~~~~~vec(A)[\phi(i_{2},j_{2})].vec(B)[\phi(i_{3},j_{3})] \\
     &= \sum\limits_{k \in [n]} vec(A)[\phi(i,k)].vec(B)[\phi(k,j)] \\
     &= \sum\limits_{k \in [n]} A[i,k].B[k,j] \\
     &= (AB)[i,j] = vec(AB)[\phi(i,j)]
 \end{align*}
 where the second equality is obtained from the definition of $\mathcal{T}$ in the lemma. 
\end{proof}
Theorem \ref{tensorizedhmm} is a corollary of the previous lemma
\begin{proof}{(Theorem \ref{tensorizedhmm}.)}
  By noting that for any vectors $u,v \in \mathbb{R}^{n}$, for any matrix $A \in \mathbb{R}^{n \times n}$,  $u^{T}.A.v = (v \bigotimes u)^{T}.vec(A)$, we  have 
  \begin{align*}
       f_{A'}(w) &= \pi'^{T}.A'_{w}.e_{[n']} \\
       &= (e_{[n']} \bigotimes \pi')^{T}.vec(A'_{w}) \\
       &= (e_{[n']} \bigotimes \pi')^{T}.vec(A'_{w_{1}}.A'_{w_{2}}) \\
       &= (e_{[n']} \bigotimes \pi')^{T}.(\mathcal{T} \times_{2} vec(A'_{w_{1}}) \times_{3} vec(A'_{w_{2}})) \\
       &=  (e_{[n']} \bigotimes \pi')^{T}.\mathcal{T}_{1}. ( vec(A'_{w_{1}}) \bigotimes vec(A'_{w_{2}}))
  \end{align*}
  where the forth equality is obtained from lemma \ref{technicallemma}, and the last equality is due to \eqref{modek}
\end{proof}
\section{Proof of theorem \ref{hardness}}
We present the $2$-CFG-Likelihood problem given as follows: \\
$\bullet$ \textbf{Instance.} A CFG $G$ with degree of ambiguity less or equal to 2, A HMM $A'$, $p \in (0,1)$, $L$ an integer \\
$\bullet$ \textbf{Problem:} Does $f_{A'}(L_{G} \cap \Sigma^{\leq L}) \geq p$
\begin{theorem} \label{2cfg}
 The $2$-CFG-Likelihood problem is NP-Hard. 
\end{theorem}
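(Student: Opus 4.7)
The plan is to chain two polynomial-time reductions starting from the NP-complete bounded Post-Correspondence Problem (bPCP): given pairs $(u_1,v_1),\ldots,(u_k,v_k)$ and an integer $N$, decide whether some sequence $i_1,\ldots,i_m$ with $m \leq N$ satisfies $u_{i_1}\cdots u_{i_m} = v_{i_1}\cdots v_{i_m}$. The intermediate problem is the bounded non-emptiness intersection for two UCFGs, and the final target is the $2$-CFG-Likelihood decision problem.

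First, given a bPCP instance, I would work over $\Sigma = \Sigma_u \cup \{1,\ldots,k\} \cup \{\#\}$ and build UCFGs $G_1, G_2$ in Chomsky normal form with languages
\begin{align*}
L_{G_1} &= \{\, u_{i_1}\cdots u_{i_m}\,\#\,i_m\cdots i_1 : 1 \leq m \leq N,~i_j \in [k]\,\}, \\
L_{G_2} &= \{\, v_{i_1}\cdots v_{i_m}\,\#\,i_m\cdots i_1 : 1 \leq m \leq N,~i_j \in [k]\,\}.
\end{align*}
The reversed index suffix after $\#$ acts as a parsing witness that uniquely identifies the derivation, so a structural induction on parse trees yields unambiguity for both grammars. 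Setting $L = N(1+\max_{i}\max(|u_i|,|v_i|))+1$ keeps the length bound polynomial in the instance, and $L_{G_1}\cap L_{G_2}\cap \Sigma^{\leq L}\neq \emptyset$ iff the bPCP instance has a solution.

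Second, given UCFGs $G_1, G_2$ (with non-terminals renamed to be disjoint), I would form a CFG $G$ with a fresh start symbol $S$ by inheriting every non-start production of $G_1$ and $G_2$ and adding $S \to \alpha$ for every start-production $S_1 \to \alpha$ in $G_1$ or $S_2 \to \alpha$ in $G_2$. This keeps $G$ in CNF, ensures $L_G = L_{G_1}\cup L_{G_2}$, and guarantees that each string has at most one parse through each branch, so $G$ has ambiguity degree $\leq 2$. Picking the one-state uniform HMM $A'$ with $A_\sigma = 1/|\Sigma|$ gives $f_{A'}(w) = |\Sigma|^{-|w|}$. By Theorem~\ref{theoremucfg}, the values $N_1 = f_{A'}(L_{G_1}\cap \Sigma^{\leq L})$ and $N_2 = f_{A'}(L_{G_2}\cap \Sigma^{\leq L})$ are computable in polynomial time as rationals of polynomial bit length. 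By inclusion--exclusion, $f_{A'}(L_G\cap \Sigma^{\leq L}) = N_1 + N_2 - N_\cap$, where $N_\cap$ is either $0$ or at least $|\Sigma|^{-L}$. Hence a $2$-CFG-Likelihood query with threshold $p = N_1 + N_2 - \frac{1}{2}|\Sigma|^{-L}$ answers ``yes'' exactly when the UCFG intersection is empty, completing the reduction.

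The main obstacle I expect is the first step: establishing unambiguity rigorously when $G_1, G_2$ are required to be in Chomsky normal form. The natural grammars for $L_{G_1}, L_{G_2}$ are near right-linear with one production chain per index, and although the standard CNF transformation preserves unambiguity, verifying this requires careful case analysis of how the fresh auxiliary non-terminals introduced by binarization interact with the reversed-index suffix. The second reduction is more routine once the $\leq 2$ ambiguity bound on $G$ is verified, and the polynomial bit length of $p$ follows because $N_1$, $N_2$ and $|\Sigma|^{-L}$ all have denominators bounded by $|\Sigma|^L$.
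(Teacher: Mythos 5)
Your proposal is correct and follows the same overall chain as the paper: bounded PCP $\rightarrow$ bounded (non-)emptiness of the intersection of two UCFGs $\rightarrow$ $2$-CFG-Likelihood via a union grammar of ambiguity at most $2$, the uniform single-state HMM, inclusion--exclusion, and polynomial-time computation of the two UCFG terms with the algorithm of Theorem~\ref{theoremucfg}; that last step is essentially the paper's Lemma~\ref{reducible}, and your explicit threshold gap of $\tfrac{1}{2}|\Sigma|^{-L}$ is a slightly cleaner way of phrasing the same comparison. Where you genuinely diverge is in the PCP stage. The paper introduces its own variant of bounded PCP in which the bound is on $k+|u_{i_1}\cdots u_{i_k}|$, and must then prove that variant NP-hard from scratch by simulating a nondeterministic Turing machine (its Lemma on BPCP); the payoff is that the grammars in its reduction (the classical palindrome-style encoding with mirrored blocks) need not depend on the bound at all, since the restriction to $\Sigma^{\leq L}$ already enforces it through the identity length $=2(k+|u_{i_1}\cdots u_{i_k}|)+3$. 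You instead start from the standard bounded PCP (bound only on the number of tiles), cite its known NP-completeness, and compile the tile bound into the grammars with a counter, using the simpler index-suffix languages $u_{i_1}\cdots u_{i_m}\#i_m\cdots i_1$ whose unambiguity is immediate from the suffix witness; this buys you a shorter argument and no Turing-machine construction, at the cost of needing the bound $N$ to be polynomially bounded (true in the standard formulation, where the bound does not exceed the number of pairs) so that the counter keeps the grammar polynomial-size, and of the CNF-preserves-unambiguity check you already flag. One shared looseness, not a defect relative to the paper: in both arguments the yes-instances of the threshold problem correspond to \emph{empty} intersections, so the hardness is really via the complement (equivalently, a Turing reduction), which is how the paper itself uses it in the final proof of Theorem~\ref{hardness}.
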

The proof proceeds by reduction from the bounded non-emptiness intersection problem of UCFGs. This problem is given as follows: \\
$\bullet$ \textbf{Instance:} Two UCFGs $G_{1},~G_{2}$, an integer $L$ \\
$\bullet$ \textbf{Problem:} Does $L_{G_{1}} \cap L_{G_{2}} \cap \Sigma^{\leq L} = \emptyset$?

We shall show that this problem is NP-Hard. But first, let's show that this problem is reducible in polynomial time to the $2$-CFG-Likelihood problem. 
\begin{lemma} \label{reducible}
 The bounded non-emptiness intersection problem between UCFGs is reducible in polynomial time to the $2$-CFG-Likelihood problem.
\end{lemma}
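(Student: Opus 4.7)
The plan is to reduce the bounded UCFG intersection emptiness problem to a single $2$-CFG-Likelihood query, after polynomial-time preprocessing with the UCFG-Likelihood algorithm of Theorem~\ref{theoremucfg}. The driving idea is inclusion--exclusion: build a CFG $G$ whose language is $L_{G_1} \cup L_{G_2}$ so that its ambiguity degree exposes words lying in both $L_{G_1}$ and $L_{G_2}$, then encode the emptiness test as a likelihood threshold comparison.

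To construct $G$ in Chomsky form, I would relabel the non-terminals of $G_1, G_2$ so that they are disjoint, introduce a fresh start symbol $S$, and for every rule of the form $S_i \to jk$ or $S_i \to \sigma$ originating from the start symbol of $G_i$ add the corresponding copy $S \to jk$ or $S \to \sigma$ to $G$, while retaining all remaining rules of both grammars. The resulting grammar has size $O(|G_1|+|G_2|)$ and satisfies $L_G = L_{G_1} \cup L_{G_2}$. Because the two sub-grammars share no non-terminal, every derivation in $G$ of a word $w$ factors, after its first step, into a derivation in exactly one of $G_1, G_2$. Unambiguity of each $G_i$ then forces $w$ to have at most one tree inherited from each side, so $G$ has ambiguity degree $\leq 2$, attaining exactly $2$ precisely when $w \in L_{G_1} \cap L_{G_2}$.

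For the HMM, I would work over the enlarged alphabet $\Sigma' = \Sigma \cup \{\$\}$, where $\$$ is a fresh symbol never produced by the CFGs, and pick the single-state HMM that emits $\$$ with probability $q := 1 - \frac{1}{2L+1}$ and every $\sigma \in \Sigma$ with probability $\frac{1-q}{|\Sigma|}$. This HMM is strictly positive on every $w \in \Sigma^*$ and satisfies $f_{A'}(\Sigma^{\leq L}) = \sum_{\ell=1}^L (1-q)^\ell \leq L(1-q) < \tfrac{1}{2}$. Using the UCFG-Likelihood algorithm, summed over lengths $\ell = 1, \dots, L$, I would compute $\alpha_i := f_{A'}(L_{G_i} \cap \Sigma^{\leq L})$ in polynomial time. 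Inclusion--exclusion then gives
$$f_{A'}(L_G \cap \Sigma^{\leq L}) = \alpha_1 + \alpha_2 - f_{A'}(L_{G_1} \cap L_{G_2} \cap \Sigma^{\leq L}),$$
and strict positivity of $A'$ implies $L_{G_1} \cap L_{G_2} \cap \Sigma^{\leq L} = \emptyset$ iff $f_{A'}(L_G \cap \Sigma^{\leq L}) \geq \alpha_1 + \alpha_2$. Setting $p := \alpha_1 + \alpha_2 \in (0,1)$ (after disposing of the trivial edge cases $\alpha_1 = 0$ or $\alpha_2 = 0$), a single oracle call on $(G, A', p, L)$ resolves the original intersection emptiness question.

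The main obstacle is the joint constraint on the produced instance: $G$ must sit in Chomsky form and have ambiguity capped at exactly $2$, while the threshold $p$ must land in the open unit interval demanded by the problem statement. The disjoint-non-terminal construction cleanly meets the structural requirement on $G$, and the dampening factor $q$ applied over the enlarged alphabet $\Sigma'$ keeps the total HMM mass on $\Sigma^{\leq L}$ below $\tfrac{1}{2}$, guaranteeing that $p$ is admissible.
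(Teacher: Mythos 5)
Your proof is correct and takes essentially the same route as the paper: build the union grammar of ambiguity at most $2$ from the two UCFGs, apply inclusion--exclusion, compute the two unambiguous likelihoods $\alpha_1,\alpha_2$ in polynomial time with the UCFG-Likelihood algorithm summed over lengths up to $L$, and decide emptiness by a single threshold query with $p=\alpha_1+\alpha_2$. Your only departure is the dampened single-state HMM over an enlarged alphabet with a fresh dummy symbol, a careful touch ensuring $p\in(0,1)$ as the problem statement formally requires, where the paper simply uses the uniform single-state HMM.
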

\begin{proof}
  Let $G_{1},~G_{2}$ be two UCFGs, and $L$ be an integer. Let $A'$ be the trivial single-state HMM that affects equal mass probability to all strings of a given length (i.e. $f_{A'}(w) = \frac{1}{|\Sigma|^{|w|}}$). We have
  \begin{align}  \label{intersection}
       f_{A'}((L_{G_{1}} \cup L_{G_{2}}) \cap \Sigma^{\leq L}) = f_{A'}(L_{G_{1}} \cap \Sigma^{\leq L}) + f_{A'}(L_{G_{2}} \cap \Sigma^{\leq L}) - f_{A'}(L_{G_{1}} \cap L_{G_{2}} \cap \Sigma^{\leq L}) 
  \end{align}
  Since $G_{1}$ and $G_{2}$ are unambiguous, the two terms $f_{A'}(L_{G_{1}} \cap \Sigma^{\leq L})$ and $ f_{A'}(L_{G_{2}} \cap \Sigma^{\leq L})$ can be calculated exactly in polynomial time by calling $L$ times the algorithm UCFG-Likelihood presented in section 2.. Moreover, we can construct in polynomial time a CFG of degree of ambiguity smaller or equal to 2, denoted $G_{1} \cup G_{2}$, that generates the language $L_{G_{1}} \cup L_{G_{2}}$, by adding a new root symbol and two production rules, whose left side is the new root symbol and its right side is the root symbol correponding to $G_{1}$ and $G_{2}$. A normalization into Chomsky form can be done in polynomial time without changing the degree of ambiguity of the original CFG.  Consequently, by \eqref{intersection}, we have $L_{G_{1}} \cap L_{G_{2}} \cap \Sigma^{\leq L} = \emptyset $ if and only if $f_{A'}(L_{G_{1} \cup G_{2}} \cap \Sigma^{\leq L})$ is greater or equal to $f_{A'}(L_{G_{1}} \cap \Sigma^{\leq L}) + f_{A'}(L_{G_{2}} \cap \Sigma^{\leq L})$.
\end{proof}
Next, we shall prove that the bounded non-emptiness intersection problem is NP-Hard, which entails immediately the result of theorem \ref{2cfg} by the reduction given in lemma \ref{reducible}. 
\begin{lemma} \label{bounded}
  The bounded non-emptiness problem between UCFGs is NP-Hard.
\end{lemma}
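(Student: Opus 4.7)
The plan is to prove NP-hardness by a polynomial-time reduction from the bounded Post-Correspondence Problem (bounded PCP), which is well known to be NP-complete: given pairs $(u_1,v_1),\dots,(u_k,v_k)$ of strings over some alphabet $\Sigma$ and a bound $L$, decide whether there exist $m \leq L$ and indices $i_1,\dots,i_m \in [k]$ with $u_{i_1}\cdots u_{i_m} = v_{i_1}\cdots v_{i_m}$. This is the variant of PCP mentioned in the outline of the paper.

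Given such a bounded PCP instance, I would work over the enlarged alphabet $\Sigma' = \Sigma \cup \{1,\dots,k\} \cup \{\#\}$ (treating the indices and the separator as new terminal symbols) and construct two UCFGs $G_1, G_2$. Informally, $G_1$ will generate exactly the strings of the form
\[
u_{i_1} u_{i_2} \cdots u_{i_m}\, \#\, i_m\, i_{m-1}\, \cdots\, i_1,
\]
for arbitrary $m \geq 1$ and any sequence $(i_1,\dots,i_m) \in [k]^m$, and $G_2$ will generate the analogous strings with $u_{i_j}$ replaced by $v_{i_j}$. Such grammars are straightforward to write: a right-recursive rule of the shape $S \to T\,\#\,I$ with $T \to u_i\, T'\, v_i$-style expansions synchronized with $I$ so that each block $u_i$ appended on the left is paired with an index symbol $i$ appended on the right in reverse order. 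Putting them into Chomsky normal form can be done in polynomial time without altering the language or the ambiguity.

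The crucial property to verify is unambiguity: any string in $L_{G_1}$ ends with a suffix $\# i_m i_{m-1} \cdots i_1$ that is in $\{\#\}[k]^+$, and because the index alphabet is disjoint from $\Sigma$, this suffix is uniquely determined by the string itself. Reading the indices from right to left then uniquely fixes which strings $u_{i_j}$ were concatenated and at which positions, so there is a unique parse tree; the same argument applies to $G_2$. Once unambiguity is established, the equivalence is immediate: by construction, a string lies in $L_{G_1} \cap L_{G_2}$ iff the prefix before $\#$ admits two factorizations $u_{i_1}\cdots u_{i_m}$ and $v_{i_1}\cdots v_{i_m}$ sharing the same index sequence encoded in the suffix, i.e.\ iff $(i_1,\dots,i_m)$ is a PCP solution. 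Choosing the length bound $L' := L\cdot(1 + \max_i(|u_i|+|v_i|)) + L + 1$ ensures that every PCP solution of length $\leq L$ yields a string of length $\leq L'$ in the intersection, and conversely every string of length $\leq L'$ in the intersection encodes a PCP solution of length $\leq L$ (up to a routine tightening of the bound).

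The main obstacle I expect is not the reduction idea itself, which is standard, but rather making the grammars syntactically clean UCFGs in Chomsky form while keeping the index-block synchronization; in particular, one must be careful that the grammar for $G_1$ does not admit a second derivation tree over the mixed alphabet (for example, a spurious parse in which a $u_i$ block is split or the $\#$ separator is consumed by a different nonterminal). Handling this cleanly requires introducing dedicated nonterminals for each terminal block and for the separator, after which unambiguity follows by the disjointness of the sub-alphabets. The length-bound bookkeeping between $L$ (PCP bound) and $L'$ (string-length bound) is routine but must be done explicitly so that the reduction is polynomial in both the instance size and in $L$.
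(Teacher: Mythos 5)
Your construction of the two unambiguous grammars and the unambiguity argument (the index suffix after \verb|#| over a disjoint sub-alphabet uniquely determines the parse) are sound and essentially the same encoding the paper uses. The genuine gap is in the bound bookkeeping, which you dismiss as ``routine tightening'' but which is exactly the point the paper has to work around. You reduce from the standard bounded PCP, whose bound is on the \emph{number of tiles} $m \leq L$, while the target problem only bounds the \emph{length of the witness string} in the intersection. With your choice $L' = L\cdot(1+\max_i(|u_i|+|v_i|)) + L + 1$, the forward direction is fine, but the converse fails: a string of length at most $L'$ in $L_{G_1}\cap L_{G_2}$ only guarantees a PCP solution with roughly $m \leq L'$ tiles (at best $m \leq (L'-1)/2$ when all $u_i$ are nonempty), which can far exceed $L$. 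So a no-instance of tile-bounded PCP that happens to have a longer solution with short blocks would be mapped to a yes-instance of the bounded intersection problem. Moreover, no choice of $L'$ alone repairs this, because the string length of your encoding is $m + 1 + \sum_j |u_{i_j}|$, a quantity that cannot certify a bound on $m$ alone.

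The paper sidesteps this by changing the source problem: it defines a BPCP variant whose bound is the combined quantity $k + |u_{i_1}\cdots u_{i_k}| \leq M$, which (up to an affine transformation) \emph{is} the length of the encoded intersection string, so the equivalence is exact in both directions; it then proves this variant NP-hard separately via the classical NTM-simulation PCP construction with a polynomial clock. If you want to keep the textbook tile-bounded PCP as your starting point, you would need an additional device, e.g.\ padding each index symbol in the suffix into a block of length $B > L\cdot\max_i|u_i|$ so that the witness length is dominated by $mB$ and a length bound of $L\cdot\max_i|u_i| + 1 + LB$ forces $m \leq L$; without such a fix (or without adopting and proving hard the combined-bound variant, as the paper does), the reduction as stated does not establish the lemma.
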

We'll reduce this problem from a variant of the bounded Post Correspondence problem (BPCP) given as follows: Given a pair of $N$ strings  $\{(u_{i},v_{i})\}_{i \in [N]}$, and integer $M$:  Does there exist a sequence $i_{1},..i_{k}$ of elements in $[N]$ such that  $u_{i_{1}}...u_{i_{k}} = v_{i_{1}}..v_{i_{k}}$ and  $k + |u_{i_{1}}..u_{i_{k}}| \leq M$? \\
The proof of NP-Hardnss of the BPCP will be given later. For now, assume it's the case. The details of the reduction are similar to the classical proof of the undecidability of the non-emptiness intersection problem of two UCFGs \cite{}. In the following, we will give the important outlines of this construction, and we refer the interested reader to \cite{asveld00}, for more details. Let $\{(u_{i},v_{i})\}_{i \in [N]}$ be a pair of strings. Define the extended alphabet $\Sigma_{e} = \Sigma \cup \{1,2..,N\} \cup \{\#,\&\}$. The idea is to construct (in polynomial time) two UCFGs $G_{1}$ and $G_{2}$, where the $G_{1}$ generates the language $L_{G_{1}} = i_{1}..i_{k}\&j_{l}..j_{1}\#v_{j_{1}}..v_{j_{l}}\& u_{i_{k}}^{R}..u_{i_{1}}^{R}$, and $G_{2}$ generating the language $L_{G_{2}} = w\&w^{R}\#w'\&w'^{R}$, where $.^{R}$ is the mirror operation. The language $L_{G_{1}} \cap L_{G_{2}}$ corresponds exactly to words of the form $i_{1}..i_{k}\&j_{k}..j_{1}\#v_{j_{1}}..v_{j_{l}}\& u_{i_{k}}^{R}..u_{i_{1}}^{R}$ where $\forall l \in [k]: i_{l} = j_{l}$, and $u_{i_{1}}...u_{i_{k}} = v_{i_{1}}..v_{i_{k}}$. The length of such string is equal to $2(k + |u_{i_{1}}..u_{i_{k}}|) + 3$. Consequently, for any $M > 0$, there exists a word in $L_{G_{1}} \cap L_{G_{2}}$ of length smaller than $ 2M + 3$ if and only if there exists a sequence that satisfies the conditions of the BPCP problem.\\
To complete the proof of \ref{bounded}, we need to prove the NP-Hardness of the variant of the bounded post-correspondence problem defined above.
\begin{lemma}
   The BPCP is NP-Hard. 
\end{lemma}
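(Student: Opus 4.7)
The plan is to prove NP-hardness by a polynomial-time reduction from 3-SAT, following the classical template used to establish NP-completeness of bounded Post-Correspondence problems (see e.g.\ the construction discussed in \cite{asveld00}). Given a 3-CNF formula $\phi$ over $n$ Boolean variables with $m$ clauses, I would construct in polynomial time an instance $(\{(u_i,v_i)\}_{i\in[N]}, M)$ of BPCP such that $\phi$ is satisfiable if and only if there exists a matching sequence $i_1,\dots,i_k$ with $u_{i_1}\cdots u_{i_k} = v_{i_1}\cdots v_{i_k}$ and $k + |u_{i_1}\cdots u_{i_k}| \leq M$.

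First, I would extend the alphabet with symbols for variables and their negations, truth values, clause indices, and a handful of delimiters. The tile set would then be organised into three groups: a single starting pair that anchors the alignment of top and bottom; a pair of \emph{assignment} tiles for each variable $x_i$ encoding the two possible truth values; and, for each clause $C_j$, a small set of \emph{verification} tiles which can only be inserted when the currently guessed assignment satisfies $C_j$. A standard ``top-runs-ahead'' trick is used: the top string encodes the guessed assignment together with the verification state, while the bottom string trails by exactly one tile, so that every symbol of the assignment must eventually be reread during clause verification. A terminating tile then allows the bottom to catch up, which is possible only if every clause has been successfully verified. Because any valid matching uses $O(n+m)$ tiles and each tile contributes $O(n+m)$ symbols, both $k$ and $|u_{i_1}\cdots u_{i_k}|$ are polynomially bounded in $n+m$, so setting $M$ equal to this polynomial bound completes the reduction.

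The main obstacle is designing the verification tiles so that no spurious sequence of tiles can produce a match without corresponding to a genuine satisfying assignment. In particular, the construction must enforce that every occurrence of a variable across all clauses is interpreted with a single consistent truth value, even though each clause's verification tiles act only locally on a short window. The standard remedy is an interleaving encoding in which the global assignment string is repeatedly re-read during clause verification; making this bookkeeping airtight, and then arguing that every sequence surviving the bound $k + |u_{i_1}\cdots u_{i_k}| \leq M$ corresponds to a satisfying assignment of $\phi$ and conversely, is the most delicate part of the argument. Once the tile gadgets are in place, the equivalence between satisfying assignments and short matching sequences follows from a routine case analysis on the order in which tiles can be applied.
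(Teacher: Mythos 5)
There is a genuine gap here: your proposal is a plan rather than a proof, and the part you defer is exactly the part that carries all the difficulty. In a PCP instance the only mechanism available is the sequential ``copy with a lag'' matching of the top and bottom strings, so enforcing that every clause gadget reads the \emph{same} global truth assignment is not a routine local-gadget exercise as it would be in a typical 3-SAT reduction; it amounts to designing tiles that re-copy and re-verify the assignment string, i.e.\ to simulating a verification computation inside the PCP dynamics. You name this explicitly as ``the most delicate part'' and leave it unresolved, so the soundness direction (every matching sequence respecting the bound $k+|u_{i_1}\cdots u_{i_k}|\leq M$ encodes a consistent satisfying assignment) is simply not established. Your accounting also betrays this: you claim any valid matching uses $O(n+m)$ tiles, yet your own ``remedy'' has the assignment re-read once per clause, which already forces $\Omega(nm)$ tiles; the bound stays polynomial, but it shows the bookkeeping has not actually been done.

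The paper sidesteps all of this by reducing from an arbitrary language $L\in\mathrm{NP}$: take a nondeterministic Turing machine $N_A$ deciding $L$ in time $p_A(|w|)$ and apply the \emph{classical} Turing-machine-to-PCP construction used in the undecidability proof, whose correctness (matches correspond exactly to accepting computation histories) is already known and need not be re-argued. The only new ingredient is quantitative: each configuration has length $O(p_A(|w|))$ and there are $O(p_A(|w|))$ steps, so an accepting run yields a match with $k+|u_{i_1}\cdots u_{i_k}|\leq O(p_A^2(|w|))$, and one sets $M$ accordingly. If you want to keep your 3-SAT route you must fully specify the verification tiles and prove both directions of the equivalence under the length bound; otherwise the cleaner fix is to adopt the generic simulation, where the consistency problem you flag never arises.
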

\begin{proof}
   We reduce any language in NP to our variant of the Post-Correspondence problem. Let $L$ be any language in NP. Thus, there exists a non-deterministic Turing Machine (NTM) $N_{A}$, and a polynomial $p_{A}$ such that for any string $w$, the machine $N_{A}$ decides if $w \in L$ in a time at most $p_{A}(|w|)$. The construction (in polynomial time with respect to the size of the NTM and the length of the string) of an instance of the PCP problem, denoted $I_{N_{A},w},$ that simulates a turing machine on a string $w$ is classical for the proof of the PCP undecidability result. The PCP instance generates two sequences of computation configurations of a turing machine such that it accepts if and only if there exists a correspondence between the two sequences.  We refer the reader to \cite{sipser13} for details of the construction. What's left is to find an upper bound of the sum of the list of pair of strings used with the length of such sequence. In what follows, we'll settle for the use big $O$ notation to dervie such upper bound. First, by definition of $p_{A}$, $k \leq O(p_{A}(|w|))$. Second, observe that a configuration of $N_{A}$ (i.e. the content of the turing machine tape) while running on  $w$ as input can't exceed $O(p_{A}(|w|))$. Hence, the length of the sequence of configurations of the NTM $N_{A}$ during a run on $w$ can't exceed $O(p_{A}^{2}(|w|))$. So, $N_{A}$ accepts $w$ if and only if there exists a solution of the PCP problem in $I_{N_{A},w}$ whose sum of the pair of strings used and the length of the string is smaller than $O(p_{A}^{2}(|w|))$. 
\end{proof}
Now, we are ready to prove theorem \ref{hardness}. \\ \textit{(Proof theorem \ref{hardness}.)} Suppose there exists an algorithm $\mathcal{A}$ that can compute $f_{A'}(L_{G} \cap \Sigma^{L})$ in polynomial time. Then, by calling $\mathcal{A}$ $L$ times, we can compute in polynomial time $f_{A'}(L_{A} \cap \Sigma^{\leq L})$. Hence, we can decide the $2$-CFG-Problem in polynomial time. Since this problem is NP-Hard by theorem \ref{2cfg}, this is possible only if $P = NP$, which completes the proof.
\section{}
\begin{proof}{(Theorem \ref{fpras})}
 We first write the term $\mathbb{P}(X = 1)$ in a suitable manner,
 We have 
 \begin{align}
\mathbb{P}(X = 1) &= \sum\limits_{w \in \Sigma^{L}} \mathbb{P}(w , X = 1) \nonumber \\
& = \sum\limits_{w \in \Sigma^{L}} \frac{f_{G}(w).f_{A'}(w)}{\sum\limits_{w \in \Sigma^{L}} f_{G}(w).f_{A'}(w)}.\frac{1}{f_{G}(w)} \nonumber \\
&= \sum\limits_{w \in \Sigma^{L}} \frac{f_{A'}(w). \delta_{G}(w)}{\sum\limits_{w \in \Sigma^{L}} f_{G}(w).f_{A'}(w)} \nonumber \\
&= \frac{f_{A'}(L_{G} \cap \Sigma^{L})}{\sum\limits_{w \in \Sigma^{L}} f_{G}(w).f_{A'}(w)} \label{bern}
\end{align}
Now we can prove the statement 1. \\
(1) Let $G$ be a POLYCFG with degree of ambiguity $\Theta(n^{k})$ for some $k > 0$. We have, then
$$ \forall w \in \Sigma^{L}:~f_{G}(w) \leq \delta_{G}(w).O(L^{k}) $$ 
Multiplying each term by $f_{A'}(w)$ and summing up, we obtain 
\begin{align*}
     \sum\limits_{w \in \Sigma^{L}} f_{G}(w).f_{A'}(w) &\leq O(L^{k}). \sum\limits_{w \in \Sigma^{L}} \delta_{G}(w).f_{A'}(w) \leq O(L^{k}) f_{A'}(L_{G} \cap \Sigma^{L}) \\
     \implies   \mathbb{P}(X = 1) & = \frac{f_{A'}(L_{G} \cap \Sigma^{L})}{\sum\limits_{w \in \Sigma^{L}} f_{G}(w).f_{A'}(w)}\geq \Omega(\frac{1}{L^{k}})
 \end{align*}
(2) If $f_{A'}(L_{G} \cap \Sigma^{L}) = 0$, then algorithm 2 outputs 0. Indeed, we have $\sum\limits_{w \in \Sigma^{L}} \delta_{L_{G}}(w).f_{A'}(w) = 0$ if and only if $\sum\limits_{w \in \Sigma^{L}} f_{G}(w).f_{A'}(w) = 0$. Next, let's focus on the case $f_{A'}(L_{G} \cap \Sigma^{L}) \neq 0$. Let $\tilde{p}_{N}$ denote the empirical estimate of $\mathbb{P}(X = Accept)$ with a sample size equal to $N$. By Hoeffding's inequality, for $N \geq O((\frac{L^{k}}{\epsilon})^{2})$, we have with probability greater than $\frac{3}{4}$
$$ |\mathbb{P}(X = 1) - \tilde{p}_{N} | \leq \frac{\epsilon}{L^{k}} \implies | \frac{\tilde{p}_{N}}{\mathbb{P}(X=1)} - 1 | \leq \frac{\epsilon}{\mathbb{P}(X = 1).L^{k}} \leq \epsilon$$
where the second inequality is due to the first statement of the theorem.\\
Consequently, replacing $\mathbb{P}(X = 1)$ in the inequality above by its expression in \eqref{bern}, we obtain, with probability greater than $\frac{3}{4}$,
$$ | \frac{\tilde{p}_{N}.\sum\limits_{w \in \Sigma^{L}} f_{G}(w).f_{A'}(w)}{f_{A'}(L_{G} \cap \Sigma^{L})} - 1 | \leq \epsilon$$
\end{proof}

\end{document}